\documentclass[letterpaper, 10 pt, conference]{ieeeconf}

\IEEEoverridecommandlockouts

\usepackage{amsmath, amsfonts, algorithm, algpseudocode, graphicx}
\usepackage{cite}
\usepackage{amsthm}
\usepackage{xcolor}
\usepackage{booktabs}
\usepackage{bbold}
\theoremstyle{plain}

\newtheorem{prop}{Proposition}
\usepackage{standalone}
\usepackage{tikz}
\usepackage{subcaption}
\usetikzlibrary{automata, calc, positioning}
\theoremstyle{definition}
\newtheorem{defn}{Definition}
\newtheorem{prob}{Problem}

\tikzstyle{startstop} = [ellipse, draw, align=center, minimum width=2.5cm, minimum height=1cm]
\tikzstyle{process} = [rectangle, draw, align=center, minimum width=3.5cm, minimum height=1cm]
\tikzstyle{decision} = [diamond, draw, aspect=2, align=center, inner sep=2pt]
\tikzstyle{arrow} = [->, thick]
\let\labelindent\relax
\usepackage{enumitem}
\usepackage{stfloats}

\usetikzlibrary{arrows.meta, positioning, shapes.geometric}

\title{\LARGE \bf
Safe, Real-Time Active Model Discrimination and Fault Diagnosis for Nonlinear Systems via Differentiable Reachability
}

\author{Xinpei Ni$^{1}$ \and Melkior Ornik$^{2}$ \and Glen Chou$^{1}$ \and Samuel Coogan$^{1}$
\thanks{$^{1}$X. Ni, G. Chou and S. Coogan are with the Institute of Robotics and Intelligent Machines (IRIM), Georgia Institute of Technology, Atlanta, GA 30332, USA
        {\tt\small \{xni32, chou, sam.coogan\}@gatech.edu}}
\thanks{$^{2}$Melkior Ornik is with the Department of Aerospace Engineering, University of Illinois Urbana-Champaign,
        Urbana, IL 61801, USA
        {\tt\small mornik@illinois.edu}}
}

\begin{document}

\maketitle
\thispagestyle{empty}
\pagestyle{empty}
\vspace{-30em}

\begin{abstract}

We present a safe, real-time algorithm for active fault diagnosis and model discrimination for uncertain continuous-time nonlinear systems with process and measurement disturbances. Given a finite set of candidate models representing nominal and faulty modes, including actuator and sensor faults, we formulate an output-feedback, time-varying policy optimization problem that (i) robustly enforces state-input safety constraints over a finite horizon and (ii) drives the system to produce sampled measurements consistent with at most one model, enabling deterministic diagnosis. To solve this problem in real time, we develop a tractable approximation using interval over-approximations of reachable state and output sets, and encode diagnosability via a differentiable objective that penalizes overlap between the reachable output sets of possible models. The resulting optimization is solved efficiently online with gradient-based methods using JAX and differentiable reachability primitives. We evaluate our method on sensor and actuator fault diagnosis (up to 11 fault modes) in several high-dimensional nonlinear robotic systems, including a simulated quadrotor and fighter-jet model, a hardware differential-drive robot, and quadrupedal navigation. Across these case studies, our approach achieves reliable model discrimination in under 50 ms, outperforming baselines in discrimination success rate and speed while providing formal safety guarantees.

\end{abstract}

\begin{figure*}[!b]\vspace{-10pt}
    \centering
    \begin{subfigure}{0.74\textwidth}
        \centering
    \begin{tikzpicture}
        \node[anchor=south west, inner sep=0] (img)
            at (0,0) {\includegraphics[width=\linewidth, trim=0 1cm 0 0.7cm,clip]{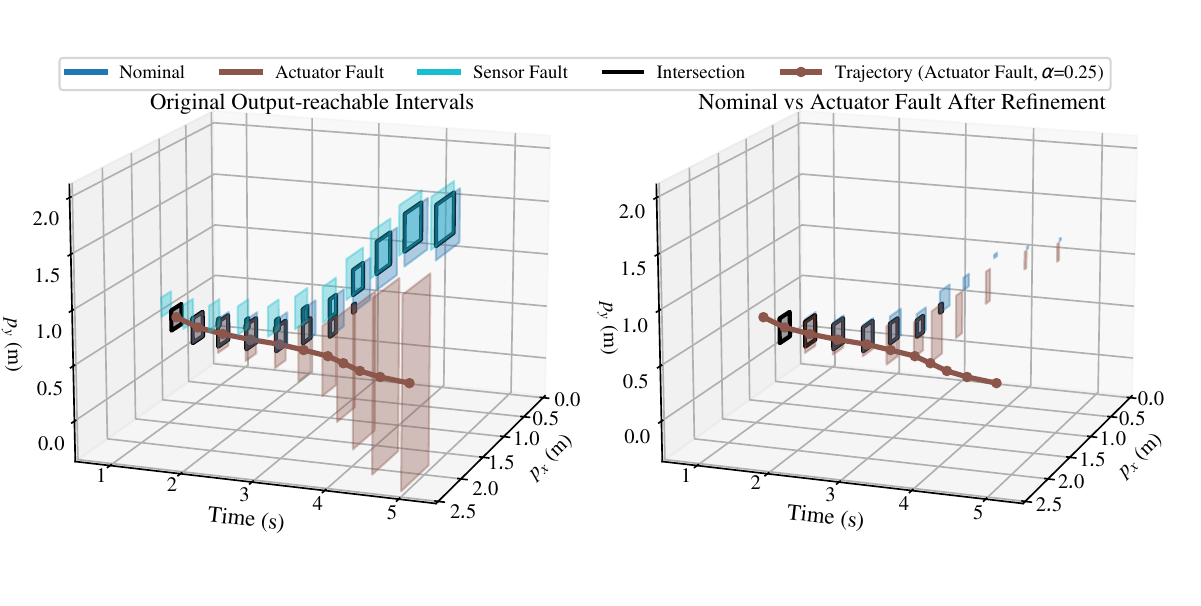}};
        \node[anchor=north west, font=\bfseries]
            at (img.north west) {(A)};
    \end{tikzpicture}

    \label{fig:unicycle_top}
    \end{subfigure}
    \begin{subfigure}{0.25\textwidth}\vspace{-10pt}
    \begin{tikzpicture}
    \node[anchor=south west, inner sep=0] (img)
        at (0,0) {\includegraphics[width=\linewidth]{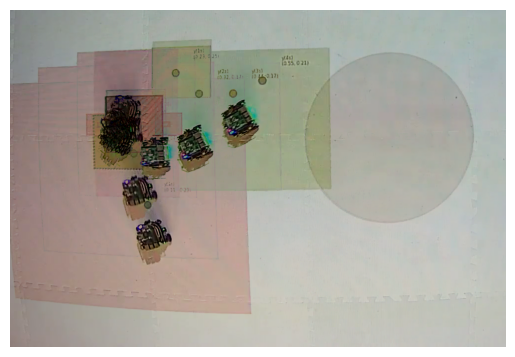}};
    \node[anchor=north west, font=\bfseries]
        at (img.north west) {(B)};
    \end{tikzpicture}
    \begin{tikzpicture}
    \node[anchor=south west, inner sep=0] (img)
        at (0,0) {\includegraphics[width=\linewidth]{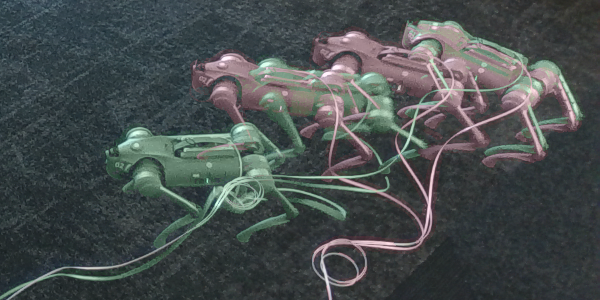}};
    \node[anchor=north west, font=\bfseries]
        at (img.north west) {\textcolor{white}{(C)}};
    \end{tikzpicture}
    \end{subfigure}
    \caption{\textbf{(A)} Illustration of output reachable intervals with and without sensor anticipating refinement. Using output-anticipating refinement allows the intersections of output reachable intervals to shrink faster and enables faster diagnosis. \textbf{(B)} Top view of a hardware experiment in which a separating output-feedback controller is computed by (\ref{opt:active-intersection-refinement}) and applied on a unicycle system. The controller successfully discriminates a sensor fault from an actuator fault in 5 seconds while ensuring that the system's \emph{true position} avoids an obstacle shaded in gray. While the output reachable set for sensor fault (in green) overlaps with the obstacle, this is caused by the sensor fault's output corruption and the robot's \emph{true state} i.e. position remains clear of the obstacle at all times. \textbf{(C)} Angled view of a hardware experiment where a separating controller is computed by \eqref{eq:opt_cost} and applied on a quadruped to diagnose actuator and sensor faults in two separate runs. Frames corresponding to the actuator fault are tinted in red, and frames corresponding to the sensor fault are tinted green. The controller successfully diagnoses the fault in 2 seconds.}
    \label{fig:placeholder}
\end{figure*}
\section{Introduction}

Faults are inevitable in long-duration robotic deployments. For instance, sensors can drift or saturate, actuators gradually lose effectiveness, and unmodeled dynamics can emerge as hardware wears. In safety-critical settings, delayed or incorrect diagnosis can be catastrophic: controllers that trust biased measurements or command degraded actuators may quickly violate safety constraints or destabilize the system. Thus, robots must not only detect that a fault has occurred, but also isolate \emph{which} fault mode is valid quickly enough to enable reconfiguration or fail-safe behavior.

A standard formalization of fault isolation is \emph{model discrimination} \cite{reilly1970statistical}: given a finite set of candidate dynamical models (e.g., a nominal model and multiple fault models), determine which model is consistent with observed behavior. This task is difficult because diagnosis must be performed from noisy, partial measurements under disturbances and uncertainty. Moreover, many distinct candidate models can produce similar measurement signals. For instance, a quadruped with a degraded leg may appear similar to a healthy quadruped with a biased IMU when commanded to walk straight. Control synthesis, however, can simplify diagnosis: for the quadruped, a carefully chosen turning maneuver may generate sensor measurements that disambiguate the failure mode. This motivates \emph{active fault diagnosis} (AFD): rather than passively monitoring outputs, we design control inputs or policies that deliberately excite the system to make the true model identifiable.

However, designing such informative inputs online is challenging. The policy must robustly satisfy state--input safety constraints under \emph{every} candidate fault model and uncertainty realization, while also ensuring that sampled measurements become consistent with \emph{at most one} model so that diagnosis is deterministic. For high-dimensional nonlinear continuous-time systems, exact reachable-set computation is intractable, and most set-based AFD methods are restricted to linear/affine dynamics or require expensive offline computation \cite{harirchi2016model, singh_input_2018}. Conversely, statistical and learning-based approaches can scale but typically lack formal guarantees on safety and correct isolation under worst-case uncertainty.

In this paper, we bridge this gap with a safe, real-time method for active model discrimination and fault diagnosis in uncertain nonlinear systems with process and measurement disturbances.
We formulate AFD as an output-feedback, time-varying policy optimization problem that couples robust safety with a diagnosability requirement over sampled outputs. To make the problem tractable, we compute interval over-approximations of reachable \emph{state} and \emph{output} sets and encode discrimination via a differentiable objective that penalizes overlap between the reachable output sets of the possible models. This yields an optimization amenable to efficient gradient-based solution, enabling fast replanning as candidate models are pruned online. 
Our contributions are:
\begin{itemize}[leftmargin=0.6cm]
    \item An output-feedback policy optimization formulation for safe active fault diagnosis in uncertain nonlinear systems.
    \item Tractable, differentiable over-approximations of reachable output sets and a separation objective for model discrimination.
    \item A novel output-aware algorithm to optimize for a separating controller, with improved performance resulting from intersection refinements.
    \item A fast GPU-parallelized implementation enabling online replanning and model pruning.
    \item Experiments on high-dimensional robotic case studies, including sensor/actuator faults (up to 11 modes) on a simulated fighter-jet model, a hardware differential-drive robot, and quadrupedal navigation \textcolor{black}{in under 50 ms}.
\end{itemize}

\section{Related Work}

\subsection{Statistical, Data-Driven, and Discrete Methods}\label{sec:related_statistical}
Fault detection and model discrimination are ubiquitous tasks across science and engineering, and have therefore been studied extensively. Early work on model discrimination emerged in the chemistry community in the 1970s, where Bayesian methods were used to probabilistically distinguish among candidate mathematical models of chemical processes \cite{reilly1970statistical}. In reliability engineering, fault trees \cite{de2020obtaining} are commonly used for fault detection, but they primarily adopt a discrete-event perspective; in contrast, we consider fault diagnosis for systems with continuous state spaces. Related probabilistic approaches to fault detection and diagnosis have also been explored in the control and robotics communities~\cite{xu_joint_2025, patan2026robust, khalastchi2018sensor, golombek2011online, streif2014optimal}. However, these methods are fundamentally statistical and typically do not provide formal guarantees on either fault identifiability or safety. Other approaches leverage deep learning by training classifiers to detect faults~\cite{garg2023model, park2021data, wang2020active}. While such methods can be highly effective within the training distribution, they are largely heuristic and generally lack formal guarantees. In contrast, our method ensures safety while performing active fault detection in real time with formal correctness guarantees on detection.

\subsection{Verification and Set-Based Methods}
\looseness-1Set-based approaches have been proposed for both passive~\cite{rosa2013fault} and active~\cite{scott2014input} fault diagnosis. These methods compute (over-approximations of) reachable sets under different fault hypotheses and eliminate hypotheses that are inconsistent with the measured outputs. By reasoning over sets, they can rule out fault modes deterministically, rather than probabilistically as in Bayesian formulations (Sec. \ref{sec:related_statistical}).

Due to the computational challenges of reachability analysis, most existing set-based techniques are limited to linear or affine dynamics. For example, \cite{scott2014input} uses a zonotope-based observer and a mixed-integer program to design active inputs for linear systems, and~\cite{harirchi2018guaranteed} establishes guaranteed $T$-detectability for distinguishing affine models. Moreover, \cite{marseglia2017active} considers affine systems under output measurements but requires costly multi-parametric optimization, which precludes real-time use. Several works address nonlinear systems but remain passive: \cite{mu2024set, rego2025zeta} study set-based fault detection for nonlinear dynamics, while \cite{harirchi2016model} provides guaranteed passive discrimination for polynomial dynamics via moment relaxations, at the cost of solving expensive optimization problems that are not amenable to real-time operation. Other nonlinear safety-oriented methods, such as barrier-function approaches \cite{ballotta2025fault, zhang2025safe}, can ensure safety under faults but do not directly enable active discrimination; in particular, they typically only passively detect faults or can only detect faults that drive the system toward constraint violation. 

Only a small number of papers study set-based \emph{active} model discrimination for \emph{nonlinear} systems. The approach in \cite{singh_input_2018} designs informative inputs for nonlinear dynamics but relies on integer programming and does not scale well with dimension. The method of \cite{rudolph2016set} addresses polynomial and rational dynamics via a computationally expensive bilevel program, and is therefore restricted to small systems and to discriminating between two candidate models. A central difficulty in set-based model discrimination methods is computing reachable \emph{output} sets under output-feedback control, i.e., bounding the set of measurement outputs that can arise under uncertainty, which is needed to certify fault diagnosis and safety under faulty conditions. While there has been progress on reachable-set computation for nonlinear systems under output feedback~\cite{dean2021guaranteeing, chou2023synthesizing, chou2022safe, leeman2026vision}, these methods remain too slow for real-time deployment or are not differentiable. In contrast to the methods above, in this work, we build on recent advances in efficient reachable \emph{state}-set computation~\cite{harapanahalli2024immrax} to enable real-time set-based active model discrimination and fault detection for nonlinear systems via fast and differentiable GPU-accelerated reachability analysis.

\section{Problem Formulation}

\noindent We consider a set of $M$ uncertain nonlinear continuous-time dynamical systems $f_m$ and measurement output maps $h_m$
\begin{subequations}\label{eq:system}
    \begin{align}
        \dot{x}_m(t) &= f_m(x_m(t), u(t), w(t), \theta_a),  \label{eq:system_1} \\
        y_m(t) &= h_m(x_m(t), u(t), v(t), \theta_o), \label{eq:system_2}
    \end{align} 
\end{subequations}
\looseness-1indexed by $m \in \{0,\ldots,M-1\}$, with state $x_m\in\mathcal{X}$, control $u\in \mathcal{U}$, process noise $w \in \mathcal{W}$, measurement output $y\in\mathcal{Y}$, and measurement noise $v \in \mathcal{V}$. Here, $f_m: \mathcal{X} \times \mathcal{U} \times \mathcal{W} \times \Theta_a \rightarrow \mathcal{X}$ and $h_m: \mathcal{X} \times \mathcal{U} \times \mathcal{V} \times \Theta_o \rightarrow \mathcal{Y}$ are differentiable functions.
We denote actuator parameters as $\theta_a \in \Theta_a$ and output parameters as $\theta_o \in \Theta_o$. 
We let $(f_0, h_0)$, i.e., $m=0$, denote the \textit{nominal} model which defines the state evolution and measurements of the healthy system, with parameters $\theta_a^* \in \Theta_a$ and $\theta_o^* \in \Theta_o$. The remaining models $(f_m, h_m)$, for $m \in \{1,\ldots,M-1\}$, denote various fault modes. We denote the set of systems as $\mathcal{M}:=\{(f_m,h_m)\}_{m=0}^{M-1}$ and consider the following fault modes:
\begin{itemize}
    \item \textit{Actuator fault}: where there is a deviation in the dynamics function $f_m(x,u, w, \theta_a)$, i.e., $f_m \ne f_0$ and $h_m = h_0$,
    \item \textit{Sensor fault}: where there is a deviation in the output function $h_m(x, u, v, \theta_o)$, i.e., $f_m = f_0$ and $h_m \ne h_0$,
    \item \textit{Simultaneous fault}: where there are deviations in both the dynamics and output, i.e., $f_m \ne f_0$ and $h_m \ne h_0$.
\end{itemize}

\begin{defn}
\textit{Sampled measurement set.\quad} We assume that over a time interval $[0, T]$, measurements are collected from the system at a discrete set of $N_t$ pre-determined time instants $\hat{\mathcal{T}}:=\{\hat t_1,\ldots,\hat t_{N_t}\} \subset [0,T]$. Assuming the system is operating under the true model $m^* \in \{0,\ldots,M-1\}$, this yields the \textit{sampled measurement set} $\hat{Y} :=\{\hat y_i := y_{m^*}(\hat{t}_i)\}_{i=1}^{N_t}$. Let $\hat{\mathcal{Y}}$ denote the set of all possible $\hat Y$ that can be realized under different disturbances $w(\cdot)$ and $v(\cdot)$.
\end{defn}

\begin{defn}
    \textit{Consistency set.\quad} \looseness-1We define $\mathcal{I}(\hat{{Y}})$ as the set of models consistent with the sampled measurement set $\hat{Y}$ as
\begin{equation}
\begin{split}
    \mathcal{I}(\hat{{Y}}) := \{& m \in \{0,\ldots,M-1\} \mid \exists x(0) \in \mathcal{X}_0, \\ & \exists w(\cdot)\in \mathcal{W},\ \exists v(\cdot)\in \mathcal{V}, \\ & \textrm{s.t. } y_m(\hat t_i) = \hat y_i,\quad \forall i \in \{1,\ldots, N_t\} \}.
\end{split}
\end{equation}

\end{defn}

\noindent \looseness-1We consider these \textit{guaranteed} fault diagnosis problems:
\begin{prob}\label{prob:passive-fd}
    \textit{Passive Model Discrimination.\quad}
    Given $\mathcal{M}$, $\mathcal{W}$, $\mathcal{V}$, a set of possible initial states $\mathcal{X}_0$, and a control input sequence $u{(\cdot)}$, we wish to identify which model $m\in\mathcal{M}$ is consistent with the system's recorded outputs, or none exists.

\end{prob}

This problem can be solved by Alg. \ref{alg:GENERIC_FD}.
\begin{algorithm}
\caption{Fault Diagnosis}\label{alg:GENERIC_FD}
\begin{algorithmic}[1]

\State \textbf{Input:} Initial model set $\mathcal{M}$, observation intervals $\mathcal{R}_{y,i}$, control sequence $u(\cdot)=\{u_0, u_1, \dots u_N\}$

\State Apply $u_0$ to system

\For{$k = 1, \dots, N$}
    \State Measure output $y(t_k)$

    \For{$m \in \mathcal{M}$}
        \If{$y(t_k) \notin \mathcal{R}^{m}_y(t_k, u_k)$}
            \State $\mathcal{M} \leftarrow \mathcal{M} \setminus \{m\}$ \Comment{Prune inconsistent model}
        \EndIf
    \EndFor

    \If{$|\mathcal{M}| \le 1$}
        \State \textbf{return} $\mathcal{M}$ \Comment{Diagnosis complete}
    \EndIf

    \State Apply $u_k$ to system

\EndFor
\end{algorithmic}
\end{algorithm}

An ideal $u(\cdot)$ for solving Prob. \ref{prob:passive-fd} is one that can reduce $\mathcal{M}$ to a singleton, and computing such an optimal control sequence $u$ that solves Prob. \ref{prob:passive-fd} requires solving the active fault diagnosis problem:
\begin{prob}\label{prob}
    \textit{Active Fault Diagnosis.\quad }
Given $\mathcal{M}$, $\mathcal{W}$, $\mathcal{V}$, and a set of possible initial states $\mathcal{X}_0$,
we wish to design a piecewise continuous, time-varying control policy $\pi: [0, T]\times \mathcal{Y} \rightarrow \mathcal{U}$ which ensures that 1) any possible resulting sampled measurement set $\hat{\mathcal{Y}}$ is consistent with at most one model in $\mathcal{M}$ and 2) a prescribed set of state-control constraints $\mathcal{S}\subseteq \mathcal{X} \times \mathcal{U}$ encoding, e.g., collision avoidance and actuator limits, is robustly satisfied for all $t \in [0,T]$, \emph{i.e.}, for all $w \in \mathcal{W}$, $v \in \mathcal{V}$, and all models in $\mathcal{M}$.
We formalize this problem as the following feasibility problem \eqref{eq:afd_opt}:
\begin{subequations}\label{eq:afd_opt}
\begin{align}
\text{find}\quad &\pi \label{eq:afd_obj}\\
\text{s.t.}\quad 
&\dot x_m(t)= f_{m}(x_m(t),\pi(t, y_m(t)),w(t), \theta_a^m),\nonumber\\
& \qquad \forall t\in[0,T],\quad   \forall m \in \{0,\ldots,M-1\},\label{eq:afd_dyn}\\
&y_m(t)= h_{m}(x_m(t),,\pi(t, y_m(t)),v(t),\theta_o^m), \quad \nonumber\\
& \qquad \forall t\in[0,T],\quad   \forall m \in \{0,\ldots,M-1\},\label{eq:afd_out}\\
&x_m(0)\in\mathcal{X}_0,\quad \forall m\in\{0,\ldots,M-1\}, \label{eq:afd_init}\\
&(x_m(t),\pi(t, y_m(t)))\in\mathcal{S}, \quad \forall t\in[0,T], \nonumber\\
& \qquad \forall w(\cdot)\in\mathcal{W},\quad \forall v(\cdot)\in\mathcal{V}, \nonumber\\
& \qquad \forall m \in \{0, \ldots, M-1\}, \label{eq:afd_state_input}\\
& |\mathcal{I}(\hat{Y})| \leq 1, \quad \forall \hat Y \in \hat{\mathcal{Y}}.\label{eq:afd_consistency}
\end{align}
\end{subequations}

The algorithm for solving \eqref{eq:afd_opt} must be:
\begin{itemize}
    \item efficient enough to compute in real time, 
    \item scalable to high-dimensional nonlinear systems, and
    \item sound; i.e., it identifies that the system is operating under model $m$ if and only if model $m$ is truly active.
\end{itemize}
\end{prob}

In practice, exactly solving \eqref{eq:afd_opt} is intractable, since computing exact state and output reachable sets for nonlinear systems is itself intractable, and these sets are required to enforce \eqref{eq:afd_state_input} and \eqref{eq:afd_consistency}. In what follows, we provide preliminary background in Sec. \ref{sec:preliminaries} and then describe our approximate solution method for \eqref{eq:afd_opt} in Sec. \ref{sec:method}.

\section{Preliminaries}\label{sec:preliminaries}
\subsection{State and Output Reachable Sets}
Given $(f_m, h_m)$ for some model $m \in \{0, \ldots, M-1\}$ as defined in \eqref{eq:system} and an initial state set $\mathcal{X}_{0}$, we define its \textit{state reachable set} at time $T$ as:
\begin{equation} \label{eq:state_reach_exact}
\begin{split}
    \hspace{-8pt}\mathcal{X}_T^m := \{&x(T) \in \mathcal{X} \mid\; x(0) \in \mathcal{X}_{0},\ \dot x_m=f_m(x_m,u,w),\\
    &  u(\tau) \in \mathcal{U}, w(\tau) \in \mathcal{W}, \,\forall \tau \in [0, T)\}.
\end{split}
\end{equation}
Similarly, the \textit{output reachable set} is defined as the image of the state reachable set under the observation map $h_m$:
\begin{equation} \label{eq:output_reach_exact}
    \mathcal{Y}_T^m := \{h_m(x, u, v) \mid x \in \mathcal{X}_T^m, u \in \pi(t,\mathcal{X}_t), v\in \mathcal{V}\}.
\end{equation}

Because computing (\ref{eq:state_reach_exact}) and (\ref{eq:output_reach_exact}) exactly is difficult, we exploit mixed monotonicity \cite{coogan2020mixed} to efficiently compute their overapproximations. 
For the above system, we leverage interval analysis \cite{jaulin1993set, didrit2001applied} to obtain \emph{inclusion functions} $\mathsf{F}_m=[\underline{F_m}, \overline{F_m}], \mathsf{H}_m=[\underline{H_m}, \overline{H_m}]$ over a given \emph{interval} $[x]=[\underline{x}, \overline{x}], [v]=[\underline{v}, \overline{v}], [w]=[\underline{w}, \overline{w}]$ such that $\underline{F_m}([x], u, [v]) \leq  f_m(x) \leq  \overline{F_m}([x],u,[v]])$ and $\underline{H_m}([x], u, [w]) \leq  h_m(x) \leq  \overline{H_m}([x],u,[w]])$ for all $x \in [x], v\in[v], w\in[w]$. This set of inclusion functions induces an \emph{embedding system} \cite{abate2020computing} $(\mathsf{F}_m, \mathsf{H}_m)$ for the system $(f_m, h_m)$.
\subsection{Approximating Reachable Sets with \texttt{immrax}} \label{sec:prelim-immrax}
To efficiently compute such embedding systems, we turn to \texttt{immrax} \cite{harapanahalli2024immrax}, a parallelizable and differentiable JAX-based Python toolbox that facilitates interval analysis and mixed monotone reachability through composable function transforms. \texttt{immrax} leverages core features of JAX, a Python package for high-performance computation, such as just-in-time (JIT) compilation, GPU acceleration, and automatic differentiation, to efficiently compute sound reachable set over-approximations. 
This process is highly scalable, as it allows for rapid reachable set refinement via parallelized state-space partitioning on GPUs. In prior work \cite{harapanahalli2024immrax}, \texttt{immrax} has been used to compute state reachable sets; we extend it to compute output reachable sets for use here.

\begin{defn} \textit{Interval Approximation of Output Reachable Set.}
\label{def:RF}
    Given a system model like (\ref{eq:system}), a time horizon $T$, inclusion functions $\mathsf{F}_m$ for $f_m$ and $\mathsf{H}_m$ for $h_m$, and an interval overapproximation of state reachable set at time $T$, $\mathcal{R}_x(T)$, a corresponding \emph{interval approximation of its output reachable set at time $T$} is defined as $\mathcal{R}_y(T, \mathcal{R}_x, u, [v]):=[\underline{H}_m(\mathcal{R}_x(T), u, [v]), \overline{H}_m(\mathcal{R}_x(T), u, [v])]$ when it is subject to measurement noise $v \in [v]$ and input $u$ over all $t\in[0, T]$.
\end{defn}

\begin{prop}
$\mathcal{R}_y$ as given in Definition \ref{def:RF} overapproximates the true output-reachable set $\mathcal{Y}$; i.e., $\mathcal{Y} \subseteq \mathcal{R}_y$.
\end{prop}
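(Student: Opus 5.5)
The argument is a two-stage containment: first the true state reachable set sits inside the interval $\mathcal{R}_x(T)$, and then the inclusion function $\mathsf{H}_m$ carries that containment over to outputs. The first stage is assumed rather than proved here. Definition \ref{def:RF} takes $\mathcal{R}_x(T)$ to be an interval over-approximation of the state reachable set, so $\mathcal{X}_T^m \subseteq \mathcal{R}_x(T)$. This is exactly what the mixed-monotone embedding system $(\mathsf{F}_m,\mathsf{H}_m)$ computed via \texttt{immrax} delivers \cite{coogan2020mixed,abate2020computing,harapanahalli2024immrax}. I will cite that soundness property instead of re-deriving it.

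The key step is pointwise. Take an arbitrary $y \in \mathcal{Y}_T^m$. By \eqref{eq:output_reach_exact}, there exist $x \in \mathcal{X}_T^m$, an admissible input value $u$, and $v \in \mathcal{V}$ with $y = h_m(x,u,v)$. I will take $\mathcal{V}$ to be the noise interval $[v]$ used in Definition \ref{def:RF}, or contained in it.

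From the first stage, $x \in \mathcal{R}_x(T)$. The defining property of the inclusion function $\mathsf{H}_m$ then gives
$\underline{H}_m(\mathcal{R}_x(T),u,[v]) \le h_m(x,u,v) \le \overline{H}_m(\mathcal{R}_x(T),u,[v])$
componentwise, for every $x \in \mathcal{R}_x(T)$ and $v\in[v]$. Hence $y \in \mathcal{R}_y(T,\mathcal{R}_x,u,[v])$. Since $y$ was arbitrary, $\mathcal{Y}_T^m \subseteq \mathcal{R}_y$.

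The main obstacle is the dependence on $u$. In \eqref{eq:output_reach_exact}, $u$ ranges over $\pi(t,\mathcal{X}_t)$, while Definition \ref{def:RF} fixes a single $u$. There are two ways to handle this:
\begin{itemize}
\item If the input is the fixed open-loop value at time $T$, the argument above goes through as written.
\item Otherwise, I would replace $u$ by an interval $[u]\supseteq \pi(T,\mathcal{X}_T)$ and use the inclusion function's validity over $[u]$. The pointwise argument is unchanged.
\end{itemize}
I would state this convention explicitly and note that the result holds for each model $m$ separately. Finally, for a vector-valued $h_m$, the componentwise interval bounds give containment of the full vector in the box $\mathcal{R}_y$.
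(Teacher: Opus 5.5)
Your proposal is correct and follows the paper's proof essentially step for step. You take an arbitrary output $y = h_m(x,u,v)$, use $\mathcal{X}_T^m \subseteq \mathcal{R}_x(T)$ (assumed, as the paper also does), and apply the inclusion-function bound for $\mathsf{H}_m$ to conclude $y \in \mathcal{R}_y$; your remarks on the input $u$ (a fixed open-loop value versus an enclosing interval $[u]$) and on requiring $\mathcal{V} \subseteq [v]$ only make explicit conventions the paper leaves implicit.
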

\noindent \textbf{Proof.} For any $y \in \mathcal{Y}$, there exists $x \in \mathcal{X}, v\in\mathcal{V}$ such that $y=h_m(x,u, v)$ given control input $u$. Because $\mathcal{R}_x$ overapproximates $\mathcal{X}$, $\mathcal{X} \subseteq \mathcal{R}_x$. As a property of inclusion functions, we have $\underline{H}_m(\mathcal{R}_x, u, \mathcal{V}) \leq h_m(x,u,v)\leq\overline{H}_m(\mathcal{R}_x,u,\mathcal{V})$. Because $\mathcal{R}_y=[\underline{H}_m(\mathcal{R}_x, u, \mathcal{V}), \overline{H}_m(\mathcal{R}_x, u, \mathcal{V})]$, it follows that $y=h_m(x,u,v)\in\mathcal{R}_y$, and hence $\mathcal{Y}\subseteq\mathcal{R}_y. \hfill \Box$

Relative to \eqref{eq:afd_opt}, robust safety constraint satisfaction is enforced via the reachable set over-approximations $\mathcal{R}_x$, and the consistency constraint \eqref{eq:afd_consistency} is relaxed to a cost penalty.

\section{Method}\label{sec:method}

We now describe our method, which conservatively solves \eqref{eq:afd_opt}. First, we show how to encode the model-discrimination constraint \eqref{eq:afd_consistency} as a differentiable objective (Sec. \ref{sec:method_objective}). Using a tractable over-approximation of the state and output reachable sets (Sec. \ref{sec:prelim-immrax}) to obtain an optimization problem that can be implemented efficiently in JAX (Sec. \ref{sec:method_implementation}), we formulate an uninformed but functional algorithm for solving separating controllers online (Sec. \ref{sec:naive_algorithm}). We then explore a output-anticipating refinement scheme that reduces conservatism and hence discriminates models faster in Sec. \ref{sec:new_algorithm}, and discuss its prerequisites for use in Sec. \ref{sec:air-prerequisites}.

\subsection{Discrimination Objective via Output Set Overlap}\label{sec:method_objective}
To efficiently solve \eqref{eq:afd_opt}, one of the core difficulties is in encoding \eqref{eq:afd_consistency}. 
To encode this, we note that any feasible solution of \eqref{eq:afd_consistency} guarantees that any realization of $\pi$ under disturbance is a \textit{separating input sequence} $u_\textrm{sep}:=\{\hat u_i :=\pi(t_i, y(\hat t_i))\}_{i=1}^{N_t}$ for model set $\mathcal{M}$, i.e., executing $u_\textrm{sep}$ ensures that for all $i \in\{0,\ldots,M-1\}$ and $j \in \{0,\ldots,M-1\}\setminus i$, $\mathcal{Y}_i \cap \mathcal{Y}_j = \emptyset$.

For \eqref{eq:system}, and $N_t$ outputs taken at times $t\in \hat{\mathcal{T}} \subset \mathcal{T}:=[0, T]$, denoting the set of all model pairs $q:=\{j, k\}$ as $\mathcal{Q}:=\{\{j, k\}\mid j,k\in\mathcal{M}, j\neq k\}$, we would like to find a control policy $\pi(t, y(t))$ such that for all pairs $q\in\mathcal{Q}$, there exists a time $t^* \in \hat{\mathcal{T}}$ such that the intersection of the fault cases' corresponding output reachable sets at $t$ under controller $\pi(\cdot)$ is empty; i.e., 
\begin{align}
\nonumber &\forall q=\{j,k\}\in\mathcal{Q},\ \exists t_i\in\hat{\mathcal{T}}:\\
&\qquad\mathcal{Y}_j(t_i, \pi(t, y)) \cap \mathcal{Y}_k(t_i, \pi(t, y)) = \emptyset, \label{constraint:no_overlap}
\end{align}
while requiring that $u$ satisfies input constraints and that the system satisfies collision avoidance constraints. Since there are cases where there does not exist a separating controller, we include \eqref{constraint:no_overlap} as a term to be minimized in the objective $J$:

\begin{equation}
\begin{aligned}
    J(\pi)&:=\sum_{q\in \mathcal{Q}} J_q(\pi) \\ 
    J_q(\pi)&:= \min_{t \in \hat{\mathcal{T}}} 
    \textsf{Vol}\left( \bigcap_{m\in q} \mathcal{Y}_m(t,\pi(t,y)) \right).\label{eq:product_cost}
\end{aligned}
\end{equation}
We justify the use of \eqref{eq:product_cost} through the following theorem. 
\begin{prop}If a system admits a separating controller $\pi$ for the model set $\mathcal{M}$, then $\pi$ is a global minimizer of cost function defined in \eqref{eq:product_cost}. \end{prop}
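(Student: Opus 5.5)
The argument has two parts: $J$ is bounded below by zero, and a separating controller attains that bound. First I would observe that every summand $J_q(\pi)$ in \eqref{eq:product_cost} is a minimum over the finite set $\hat{\mathcal{T}}$ of volumes of sets. Volume (Lebesgue measure, or for intervals the product of nonnegative side lengths, with the convention that an empty interval has volume $0$) is nonnegative. Hence $J_q(\pi)\ge 0$ for every admissible policy and every pair $q\in\mathcal{Q}$, and so $J(\pi)=\sum_{q\in\mathcal{Q}}J_q(\pi)\ge 0$ for all $\pi$. Thus $0$ is a lower bound on the cost over the whole feasible policy class.

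Next I would show that a separating controller achieves this bound. By definition, a separating controller $\pi$ satisfies \eqref{constraint:no_overlap}: for each pair $q=\{j,k\}\in\mathcal{Q}$ there is some $t_i\in\hat{\mathcal{T}}$ with $\mathcal{Y}_j(t_i,\pi)\cap\mathcal{Y}_k(t_i,\pi)=\emptyset$. At that $t_i$ the intersection $\bigcap_{m\in q}\mathcal{Y}_m(t_i,\pi)$ is empty and has volume $0$. Since $J_q$ takes the minimum over $t\in\hat{\mathcal{T}}$ and all values are nonnegative, $J_q(\pi)=0$. Summing over the finitely many pairs gives $J(\pi)=0$. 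Combined with the lower bound, $J(\pi)\le J(\pi')$ for every policy $\pi'$, so $\pi$ is a global minimizer.

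The main obstacle is definitional rather than technical. I must pin down the volume convention so that empty sets, and also degenerate sets, are assigned volume $0$, and confirm that the minimum over $\hat{\mathcal{T}}$ is attained because $\hat{\mathcal{T}}$ is finite. I would also note that the converse fails in general: $J(\pi)=0$ can arise from overlaps that are nonempty but have measure zero. So the proposition only asserts that separating controllers lie in the set of minimizers, not that they characterize it. If the implementation uses the interval over-approximations $\mathcal{R}_y$ instead of $\mathcal{Y}$, the same argument applies verbatim as long as $\pi$ separates the over-approximated sets. In that case the earlier proposition ($\mathcal{Y}\subseteq\mathcal{R}_y$) guarantees that separation of the $\mathcal{R}_y$ implies separation of the true output sets.
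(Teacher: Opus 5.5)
Your proposal is correct and follows essentially the same argument as the paper: each $J_q$ is nonnegative, so $J\ge 0$, and the separating property gives an empty intersection at some sampled time for every pair, forcing $J_q(\pi)=0$ and hence $J(\pi)=0$, the global minimum. Your extra remarks are sound refinements that the paper treats only informally or not at all: the volume convention for empty sets, the finiteness of $\hat{\mathcal{T}}$, the failure of the converse, and the transfer of separation from $\mathcal{R}_y$ to $\mathcal{Y}$.
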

\noindent\textbf{Proof}. By definition, the separating control $u$ satisfies that for each pair of models $ q=\{j,k\} \in \mathcal{Q, } \text{ there exists a time } t \text{ such that } \mathcal{Y}_j \cap \mathcal{Y}_k = \emptyset$, which sets $J_q(\pi)=0$. 
As a result, the summation (\ref{eq:product_cost}) is zero.
Since this summation is non-negative, zero is its global minimum, and thus this $\pi$ minimizes $J(\pi). \hfill \Box$

This theorem shows that any separating input yields zero objective value according to \eqref{eq:product_cost}. In particular, this is achieved when all pairs of possible models $i,j$, where $j \ne i$, have disjoint output reachable sets at some measured time instant $t \in \hat{\mathcal{T}}$. 
This implies that over the full horizon $[0,T]$, the resulting output sequence is contained only in the output reachable set of a single model.
While the reverse is not true for general systems, for any solution with a separating input $u$, the solution to (\ref{eq:product_cost}) will ensure that for any pair $q=\{j,k\}\in \mathcal{Q}\text{, there exists a time } t_i \in \hat{\mathcal{T}}, \text{when } \textsf{Vol}(\mathcal{Y}_j(t_i, \pi(t, y)) \cap \mathcal{Y}_k(t_i, \pi(t, y)))=0$, which means the intersection of these models' output reachable sets will be of measure zero. In practice, this is often sufficient for robust discrimination.

It shall be noted that the volumetric overlap $\textsf{Vol}$ used in (\ref{eq:opt_cost}) can be substituted by many other metrics that can measure separation between two compact sets, such as Hausdorff distance \cite{jaulin1993set}, that are not used in this paper. We leave evaluation of these metrics as future work.

\subsection{Differentiable, Parallel Volumetric Overlap in \texttt{immrax}}
\label{sec:method_implementation}
\textcolor{red}{}
To solve \eqref{eq:opt_cost} efficiently, we utilize the fact that \texttt{immrax} provides differentiable inclusion functions.

While it is difficult to compute volumes of exact reachable sets and their intersections, their interval overapproximations are much easier to work with. Any two intervals $[a]=[\underline{a}, \overline{a}]$ and $[b]=[\underline{b}, \overline{b}]$ intersect if and only if $\max (\underline{a}, \underline{b}) \leq \min(\overline{a}, \overline{b})$ componentwise. The volume of their intersection is $\mathsf{Vol}([a]\cap [b])=\Pi_i[\min(\overline{a}, \overline{b}) - \max (\underline{a}, \underline{b})]_i$. Then, given any two \textit{interval overapproximations} of reachable sets, we can substitute them into the previous formula to calculate their intersection in a quick, differentiable, and parallelizable manner using an efficient JAX implementation. 
See \cite{harapanahalli2024immrax} for discussion on conservativeness of these approximations.

This allows us to compute the gradient of the reachable set bounds with respect to the control parameters $\frac{\partial \mathcal{R}_y}{\partial u}$ using JAX's automatic differentiation. We solve \eqref{eq:opt_cost} to local optimality using a gradient-based optimizer (e.g., Adam), where the state trajectories for all $n_m$ models are simulated in parallel on a GPU.

\subsection{Uninformed, Single-Pass Reachability Optimization for Model Discrimination} \label{sec:naive_algorithm}
Directly implementing the components established so far in JAX, we arrive at an optimization problem that is effective in practice. In particular, it integrates interval reachable-set propagation, the overlap-volume discrimination objective, and robust safety constraints within a single formulation.

We now present an optimization problem for active fault diagnosis. We name this formulation the \emph{uninformed algorithm} because it does not explicitly leverage the fact that sensor measurements will be taken along the execution of the control sequence when computing a controller.
\begin{subequations} \label{eq:opt_cost}
\begin{align}
    \min_{u} \quad&  J(u) := \sum_{q\in \mathcal{Q}} J_q(u) \label{eq:cost_func} \\
    \text{s.t.} \quad&  J_q(u_k) = \min_{t_k\in \hat{\mathcal{T}}} {\mathsf{Vol}} \left[ \bigcap_{m\in q} [y]^m(t_k)  \right]\\
    &[x]^m(t_{k+1}) = \mathcal{R}_x^m(t_{k+1},[x]^m(t_{k}),u_k, [w]) \label{eq:state-propagate} \\
    &[y]^m(t_{k}) = \mathcal{R}_y^m(t_k,[x]^m(t_{k}),u_k, [v]) \nonumber\\
    & \hspace{35pt} = \mathsf{H}_m([x]^m(t_k),u_k,[v])  \label{eq:output-propagate} \\
     &\mathcal{R}_{x}^m(t) \subseteq \mathcal{S}, \quad \forall t \in [0,T],\ \forall m \in \mathcal{M} \label{eq:const_safety} \\
     &u(t) \in \mathcal{U}, \quad \forall t \in \mathcal{T}
\end{align}
\end{subequations}
\begin{figure}[h]
    \centering
    \begin{tikzpicture}[
    box/.style={
        draw,
        minimum width=2cm,
        minimum height=2cm,
        fill opacity=0.4,
        text=black,
        text opacity=1
    },
    af_ori/.style={box, fill=red!60},
    sf_ori/.style={box, fill=green!60},
    af_ref/.style={
        draw,
        minimum width=0.8cm,
        minimum height=1.2cm,
        fill=red!60
    },
    sf_ref/.style={
        draw,
        minimum width=1.2cm,
        minimum height=0.8cm,
        fill=green!60
    },
    arrow/.style={->, thick}
]

% =========================================================
% Left overlapping boxes
% =========================================================

\node[af_ori] (R1) at (-0.4,0.6) {};
\node[sf_ori] (G1) at (0,-0.2) {};

% Labels moved away from intersection
\node at (-.1,1.2) {\small $[y]_j/[y]_k$};
\node at (-0.13,-0.8) {\small $[y]_k/[y]_j$};
\node at (-0.2,-0.1) {\small $[y]_j\cap[y]_k$};

% Compute intersection
\path 
    let \p1 = (R1.south west),
        \p2 = (G1.south west),
        \p3 = (R1.north east),
        \p4 = (G1.north east)
    in
    coordinate (I-sw) at ({max(\x1,\x2)}, {max(\y1,\y2)})
    coordinate (I-ne) at ({min(\x3,\x4)}, {min(\y3,\y4)});

% Draw intersection
\draw[thick] (I-sw) rectangle (I-ne)
    node[pos=0.5]{};% {\small $[y]_j\cap[y]_k$};

% =========================================================
% Region labels A B C
% =========================================================

% A = upper-left non-overlap
\node at (-1.0,1.1) {\large A};

% B = intersection
\node at ($(I-sw)!0.8!(I-ne)$) {\large B};

% C = lower-right non-overlap
\node at (0.7,-.9) {\large C};

% =========================================================
% Right refinement boxes
% =========================================================

\node[af_ref] (R2) at (5.5,.7) {$[x]^q_j$};
\node[sf_ref] (G2) at (5.5,-.7) {$[x]^q_k$};

% =========================================================
% Direct arrows from intersection
% =========================================================

\draw[arrow]
    (I-ne)
    to[out=20, in=180]
    node[midway, above] {\small $\mathsf{H}_j^{-1}$}
    (R2.west);

\draw[arrow]
    ($(I-ne |- I-sw)$)
    to[out=-20, in=180]
    node[midway, below] {\small $\mathsf{H}_k^{-1}$}
    (G2.west);

% =========================================================
% Group labels
% =========================================================

\node[below=1.5cm of $(R1)!0.5!(G1)$]
    {\small Compute output interval};

\node[below=1.3cm of $(R2)!0.5!(G2)$]
    {\small Refine state interval};

\end{tikzpicture}
    \caption{Illustration of output-anticipating refinement process. Output trajectories that land in only one of the two models' output reachable interval such as Regions A and C will be instantly discriminated at this output, and only those landing in the intersection (Region B) require further separation. Region B, which can be expressed as $[y]_q=[y]_j\cap [y]_k$, can then be transformed into refined state intervals $[x]_j^q=\mathsf{H}_j^{-1}([y]_q)\cap [x]_j$ and $[x]_k^q=\mathsf{H}_k^{-1}([y]_q)\cap [x]_k$, each model $j$ having a possibly different $[x]_j^q$ for each pair $q$ containing it. The output-anticipating refinement scheme reduces conservatism by conditioning future computation on this fact.}
    \label{fig:ref_ill}
\vspace{-2em}
\end{figure}
\subsection{Output-Anticipating Refinement for Tighter Model Discrimination}
\label{sec:new_algorithm}
Starting from \eqref{eq:opt_cost}, we refine the state set used for forward propagation by focusing on ambiguity-relevant states. Noticing that finding a controller tsatisfying (\ref{constraint:no_overlap}) is equivalent to separating each pair at \emph{any} time when an output is measured, we argue that when computing a control sequence,  only states that will lead to unseparated outputs require further separation. 
Specifically, at time $t_k$, we keep only states whose outputs lie in pairwise intersections of reachable output sets. 
Assuming a closed-form expression of preimage of output map $\mathsf{H}: \mathcal{X}\rightarrow\mathcal{Y}$ exists, we map this set back to a state interval,
which is then used to initialize subsequent reachable-set propagation. Fig. \ref{fig:ref_ill} provides a visual explanation of this process. 
To further reduce conservatism, we maintain a separate interval for each model pair $q\in\mathcal{Q}$. This pairwise construction directly targets the separation condition in \eqref{eq:afd_consistency}.

The resulting multistep optimization problem is: 
for any time step $k \in \{0, 1, ..., K\}$, where $K$ denotes the horizon or the number of control steps we wish to solve for,
the optimal control $u_k$ is found as:
\vspace{-2em}

\begin{subequations}\label{opt:active-intersection-refinement}
\begin{align}
\min_{u_k \in \mathcal{U}} \quad & J(u_k) = \sum_{q\in \mathcal{Q}(t_{k}) }  J_q(u_k)\\
\text{s.t.} \quad & J_q(u_k) = \min_{t_k\in \hat{\mathcal{T}}} {\mathsf{Vol}} \left[ \bigcap_{m\in q} [y]^m_q(t_k) \right]\\
    & [x]^m(t_{k+1}) = (\ref{eq:state-propagate}), [y]^m(t_k) = (\ref{eq:output-propagate})\\
    & \tilde{[y]}^m_q(t_k) = \bigcap_{m\in q}[y]_q^m(t_k) \label{eq:refined_output_ivl} \\
    & \tilde{[x]}^m_q(t_{k}) = \mathsf{H}_m^{-1}(\tilde{[y]}_m^q(t_k))\cap {[x]}_m^q(t_k) \label{eq:refined_state_ivl}\\
    & [x]_q^m(t_{k+1})=\mathcal{R}_x^m(t_{k+1}|\tilde{[x]}_q^m(t_k), u_k, [w])\label{eq:propagate_refinement}\\
    & [x]^m(\tau) \subseteq \mathcal{S}, \quad \forall \tau \in [t_k, t_{k+1}],\ \forall m \in \mathcal{M} \label{eq:safe-air}\\
    & u(\tau) \in \mathcal{U}, \quad \forall \tau \in [t_k, t_{k+1}] \\
    & \mathcal{Q}(t_{k+1}) = \{q \in \mathcal{Q}(t_{k}) \mid J_q(u_{k}^*)>0\}
\end{align} 
\end{subequations}

\noindent where $[y]^m_q$ is the output reachable set for model $m$ under pair $q$, and $\mathsf{H}^{-1}$ is the preimage of model $m$'s output map. 
This optimization differs from (\ref{eq:opt_cost}) only in (\ref{eq:refined_state_ivl}), (\ref{eq:refined_output_ivl}), and (\ref{eq:propagate_refinement}), which describe our output-anticipating refinement.

\subsection{Requirements for Using Output-Anticipating Refinement} \label{sec:air-prerequisites}

The process of computing a preimage of $\mathsf{H}$ is NP-hard for general functions $h$\cite{didrit2001applied}. To compute $\mathsf{H}^{-1}([y](t_k))$, one may employ subpaving refinement algorithms such as Sivia \cite{jaulin1993set}, which takes an initial guess $\check{\mathsf{X}}, \hat{\mathsf{X}}$ such that $\check{\mathsf{X}} \subset \mathsf{H}^{-1}([y](t_k)) \subset \hat{\mathsf{X}}$ and recursively refines $\check{\mathsf{X}}, \hat{\mathsf{X}}$ to return a tighter estimate of $\mathsf{H}^{-1}([y](t_k))$\cite{jaulin1993set}. For a general $h$, the time complexity of finding  $\mathsf{H}^{-1}([y](t_k))$ in such a fashion is high, thus defeating Algorithm (\ref{opt:active-intersection-refinement})'s purpose of finding a separating controller in real-time speed. Devising fast set-inversion algorithms is an open research topic in interval analysis and is beyond the scope of this paper. Hence, for the sake of this paper, (\ref{opt:active-intersection-refinement}) is only used if there exists a closed form expression for an interval overapproximation $\hat{\mathsf{X}}(t_k) \supseteq \mathsf{H}^{-1}([y](t_k))$. Additionally, if it can be established that the estimate $\hat{\mathsf{X}}(t_k)$ guarantees that $[x](t_k) \subseteq \mathsf{H}^{-1}([y](t_k)) \text{ at all times } t_k$, (\ref{eq:opt_cost}) should be used. Fortunately, many robotic sensors can be modeled by affine output maps of the form $y=h(x,u,v)=Cx+Du+v$. In this case, the relevant preimage can be written as $x=h^{-1}(y,u,v)=C^\dagger(y-Du-v)$, where $C^\dagger$ is left-inverse of $C$, and can be efficiently computed or overapproximated.
\section{Results}
We present hardware and simulation experiments of our proposed active model discrimination scheme on three different systems. The goals for these experiments are:
\begin{itemize}
    \item \textbf{Unicycle} shows (\ref{opt:active-intersection-refinement}) can run in milliseconds on a physical nonlinear system with a small set of models.
    \item \textbf{ADMIRE} demonstrates (\ref{opt:active-intersection-refinement}) can be applied on a high-dimensional nonlinear system with many candidate models and solves at real-time rates.  
    \item \textbf{Go2} shows (\ref{eq:opt_cost}) is a valid choice for sensor fault models involing unobserved states while enforcing safety.
\end{itemize}
Notice that both (\ref{eq:opt_cost}) and (\ref{opt:active-intersection-refinement}) are able to handle input and state constraints. We only showcased a safety constraint on Go2 because we believe maintaining safety under a fault that causes non-measureability yields the strongest result.

\noindent\textbf{Baselines:} Very few works are capable of performing active model discrimination on nonlinear systems, and none can formulate output-feedback control policies. To the best of our knowledge, the only method that performs active model discrimination for nonlinear systems and has an open-source implementation is \cite{singh_input_2018}. We use it as a baseline and provide runtime comparisons in Sec. \ref{sec:runtimes}.

\subsection{Unicycle Model}
\
We first validate our method on a standard Unicycle system\cite{lavalle2006planning} governed by the following dynamics:
$$    \dot{p}_x = v \cos \phi, 
    \dot{p}_y = v \sin \phi, 
    \dot{v} = u_1 + w_1,
    \dot{\phi} = u_2 + w_2,$$
\noindent where $(p_x, p_y)$ is the car's position, $v$ is its velocity, $\theta$ is its heading angle, $u_1$ and $u_2$ are control inputs, and $w_1, w_2$ are disturbances. For this system, only position $(p_x, p_y)$ can be directly observed.
We consider two fault scenarios:
\begin{itemize}
    \item \textbf{Actuator Fault}: partial to full loss of turning rate control, modeled as $\dot{\phi} = \theta_a u_2 + w_2$ where $\theta_a \in [0, 1)$ represents the degradation factor.
    \item \textbf{Sensor Fault}: a constant bias is applied to position readings, modeled as $y = [p_x, p_y]^T+\theta_o$ where $\theta_o$ represents the sensor bias.
\end{itemize}

Measurements contain random noise with magnitude less than $0.05$m, i.e., $||w||_\infty\le 0.05$.
We compute an open-loop separating controller $\pi(t)$ and a separating output-feedback controller of the form $\pi^*(t, y) = K(t)(y-\hat{y})+u_{ff}(t),$ where $\hat{y}$ is a pre-planned straight-line reference trajectory from start to the obstacle, according to \eqref{opt:active-intersection-refinement} and validate it through hardware experiments, whose results are highlighted in Figure 1(A). Solving for a controller takes 47.2$\text{ms} \pm 766\mu$s.

In the hardware experiments, the robot’s position consistently falls within the refined output reachable sets of the true fault model by the end of the trial, successfully distinguishing between the two fault scenarios while avoiding the obstacle.

\subsection{ADMIRE Fighter Jet Model}
We also validate our proposed method on a simplified version of the ADMIRE Fighter Jet Model \cite{forssell2005admire}. Due to the system's complexity, we refer readers to the original technical report and associated C++ code for the full system dynamics. The simplified model has 9 states and 10 inputs:
\begin{align*}
    \dot{x} &= f(x) + g(x, \theta_a u), \quad
    y = x\\
     x &= 
    \begin{bmatrix}
        V_T & \alpha & \beta & p_b & q_b & r_b & \psi & \theta & \phi
    \end{bmatrix}^T \in \mathbb{R}^9, \\
    f(x)&=
    \begin{bmatrix}
       \dot{ V_T} \\ \dot{\alpha} \\ \dot{\beta}\\ \dot{ p_b}\\ \dot{q_b} \\ \dot{r_b} \\ \dot{\psi} \\ \dot{\theta} \\ \dot{\phi}
    \end{bmatrix}
    =
    \begin{bmatrix}
        (u_b\dot{u}_b + v_b\dot{v}_b+w_b\dot{w}_b)/V_T\\
        (u_b\dot{w}_b - w_b\dot{u}_b)/(u_b^2+w_b^2)\\
        (\dot{v}_bV_T-v_b\dot{V}_T)/(V_T^2\cos(\beta))\\
        (C_1r_b+C_2p_b)q_b\\
        C_5p_br_b-C_6(p_b^2-r_b^2)\\
        (C_8p_b-C_2r_b)q_b\\
        (q_b\sin\phi+r_b\cos\phi)/\cos\theta\\
        q_b\cos\phi-r_b\sin\phi\\
        p_b+\tan\theta(q_b\sin\phi+r_b\cos\phi)\\
    \end{bmatrix}
    \\
\end{align*}  
\vspace{-3em}
\begin{align*}
    u_b &= V_T\cos\alpha\cos\beta,\textbf{ }
    v_b = V_T\sin\beta,\textbf{ }
    w_b=V_T\sin\alpha\cos\beta\\
    \dot{u}_b &= r_bv_b-q_bw_b-g\sin\theta\\
    \dot{v}_b&= -r_bu_b+p_bw_b+g\sin\phi\cos\theta\\
    \dot{w}_b&=q_bu_b-p_bv_b+g\cos\theta\cos\phi\\
    u &\in \mathbb{R}^{10},  \theta_a \in \mathbb{R}^{10 \times 10},     {\theta}_a^* = I_{10}
\end{align*}

The mapping $g(x, \theta_a u)$ is nonlinear and thus modeled in the full system as linear interpolation of values contained in multiple aerodynamics datasheets. While it is certainly possible to implement the full datasheet and immrax is fully capable of producing linear embedding for these otherwise piecewise linear realizations of actuator model, to simplify implementation of the model, we linearize the system's actuator dynamics around the operating point Mach 0.3 and altitude 2000m \cite{bouvier2023resilience}. This yields a locally linear actuator dynamics of the form $\hat{g} = \bar{B}u$. We impose an input constraint $||u||_\infty \leq 0.05$ to ensure linearization validity. Hence, the system as implemented has the following form:
    \begin{equation}
        \dot{x} = f(x) + \bar{B} \text{diag}(\theta_a) u, \quad y = x
    \end{equation}

We define the fault set $\mathcal{M} = \{m_0, m_1, \dots, m_{10}\}$, where $m_0$ is the nominal case and $m_k$ for $k \in \{1, \dots, 10\}$ represents a total loss of control authority in the $k$-th control surface. Specifically, the actuator effectiveness matrix is defined as $\theta_{a,k} = \text{diag}(v)$ where $v_k = 0$ and $v_j = 1$ for $j \neq k$. 
We assume that an output can be sampled from the system and a new input signal applied every $1s$. Running a GPU-parallelized version of the proposed output-anticipating refinement algorithm computes a 5s long sequence of open-loop control inputs that fully separates the 11 fault cases for a runtime of 45.3ms±71.7 $\mu$s on a laptop equipped with Intel i7-13700HX, 32GB RAM, and an RTX4070 with 8GB GPU memory, enabling real-time computation of separating inputs on a physical system. The observed increase in runtime compared to Unicycle is due to both the higher dimensionality of the ADMIRE system model and an increase in number of model pairs considered for discrimination. The mixed integer program baseline method in \cite{singh_input_2018} takes 301s to run on the same device, and fails to find a separating input.

\begin{table}[ht]
\centering
\caption{Output-anticipating refinement (\ref{opt:active-intersection-refinement}) discriminates models faster than uninformed method (\ref{eq:opt_cost}).}
\label{tab:results}
\begin{tabular}{lcccccc}
\toprule
Number of Unseparated Model Pairs & 0s & 1s & 2s & 3s & 4s & 5s \\
\midrule
Uninformed Algorithm (\ref{eq:opt_cost}) & 55 & 17 & 12 & 8 & 5 & 4 \\
Output-Anticipating Refinement (\ref{opt:active-intersection-refinement}) & 55 & 1 & 1 & 1 & 1 & 0 \\
\bottomrule
\end{tabular}
\end{table}

\begin{figure}
    \centering
    \includegraphics[width=\linewidth]{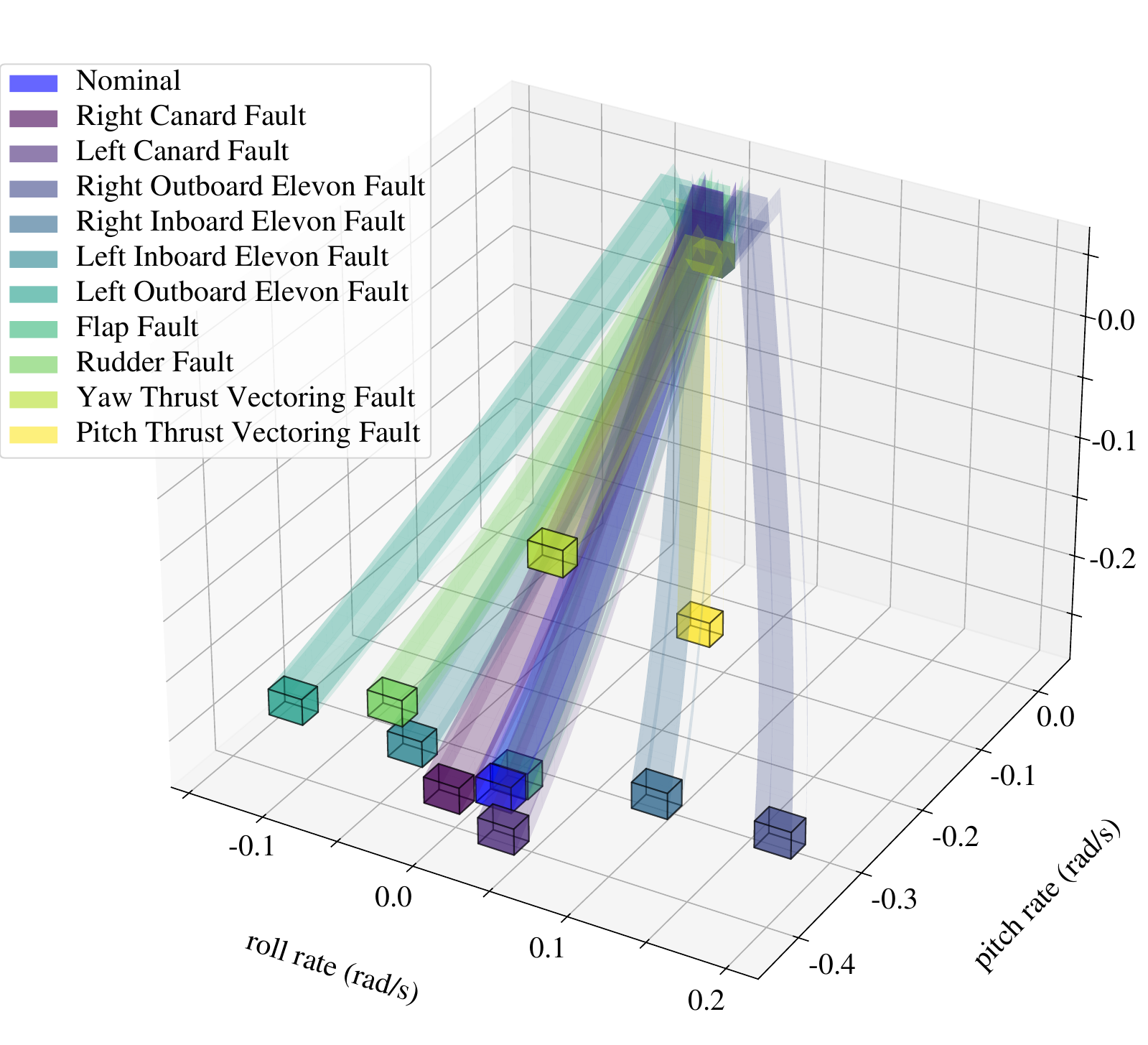}
    \caption{Visualization of the time history of output reachable intervals for the separating controller computed by solving (\ref{opt:active-intersection-refinement}) for the ADMIRE fighter jet system. Our controller fully separates every model pair in 5 seconds.\vspace{-10pt}}
    \label{fig:roll_rate_10}
\end{figure}

\vspace{-8pt}
\subsection{Quadrupedal Navigation}
We extend this framework to model potential actuator and sensor failures in a quadrupedal navigation. The dynamics of a Unitree Go2 quadruped controlled using its high-level SDK can be modeled as a planar rigid body 
with augmented state 
$x=[p_x,p_y,\tilde p_x,\tilde p_y,\phi]^T$ and control $u\in\mathbb{R}^3$, where $p_x,p_y$ denote true horizontal and lateral positions in meters, $v_x,v_y$ denote commanded body-frame velocities in m/s, $\phi$ denotes yaw angle in radians, and $\omega$ is yaw rate in rad/s. Additionally, the states  $\tilde{p}_x, \tilde{p}_y, \tilde{v}_y$ denote odometer measurements of horizontal and lateral positions and velocity, which comes with hardware noise.
Its dynamics can be expressed as:
\begin{align*}
    \dot{p}_x &= v_x \cos\phi - v_y\sin\phi, \quad
\dot{p}_y = v_x\sin\phi + v_y\cos\phi \\
    \dot{\tilde{p}}_x &= v_x \cos\phi - \tilde{v}_y\sin\phi, \quad
\dot{\tilde{p}}_y = v_x\sin\phi + \tilde{v}_y\cos\phi \\
    \tilde{v}_y &= (1-\omega^2)v_y+\omega^2\nu, \quad
    \dot{\phi} = \omega, \quad
        y = \theta_o x\\
    \begin{bmatrix}v_x, v_y, \omega\end{bmatrix} &= (\text{diag}(\theta_a)u)^T\\
    \theta_o^{\text{nominal}} &= [\mathbf{I}_{2\times 2} \text{ } \mathbf{0}_{2\times 2}\text{ }  \mathbf{0}_{2\times 1}], \theta_o^{\text{fault}} = [\mathbf{0}_{2\times 2} \text{ } \mathbf{I}_{2\times 2} \text{ } \mathbf{0}_{2\times 1}]\\
    \theta_a &\in [\underline{\theta_a}, \overline{\theta_a}], \nu \in [\underline{\nu}, \overline{\nu}], {\theta}_a^*=\mathbb{1}_3, \theta_o^*=\mathbb{1}_2
\end{align*}

The system's actuator fault is represented by modifying the actuator outputs: a
loss-of-efficiency fault scales the commanded input by a factor
$\theta_a\in[\underline{\theta_a},\overline{\theta_a}]$,
so that $\theta_{a,i}=0$ corresponds to a stuck actuator and $0<\theta_{a,i}<1$
models partial degradation. 
For sensor fault, this simplified model captures our observation that the quadruped's onboard odometer drifts over time due to inaccuracies in $v_y$ measurements, whose magnitude depends on the magnitude of yaw rate. Hence, we use $ \tilde{v}_y = (1-\omega^2)v_y+\omega^2\nu$ to model this inaccuracy in measured lateral velocities, which is excited only when a turn is commanded to the robot.

To demonstrate our proposed algorithm's ability to handle constraints, we add a collision avoidance constraint to the optimization problem. Assuming there exists a static obstacle at a position $p_o=[-1, 0]$m, we use a circular constraint function $h:=(p_r - p_o)^2 - r_o^2$ and penalize intersection in the optimization. Because the collision avoidance constraint is imposed on $p_x, p_y$, which are not observed in sensor fault, (\ref{eq:opt_cost}) must be used in place of (\ref{opt:active-intersection-refinement}). Solving \eqref{eq:opt_cost} with this collision avoidance constraint in place of \eqref{eq:const_safety} yields a controller with guaranteed obstacle avoidance. This optimization problem can be compiled and solved on a laptop equipped with Intel i7-13700HX, 32GB RAM, and an RTX4070 with 8GB GPU Memory in 40.2 ms ± 25.2 $\mathrm{\mu}$s, demonstrating that our active fault diagnosis algorithm can be solved in real time on a nonlinear system while also respecting safety constraints.

\begin{figure}[h]
\vspace{-0.5em}
    \centering
    \includegraphics[width=\linewidth]{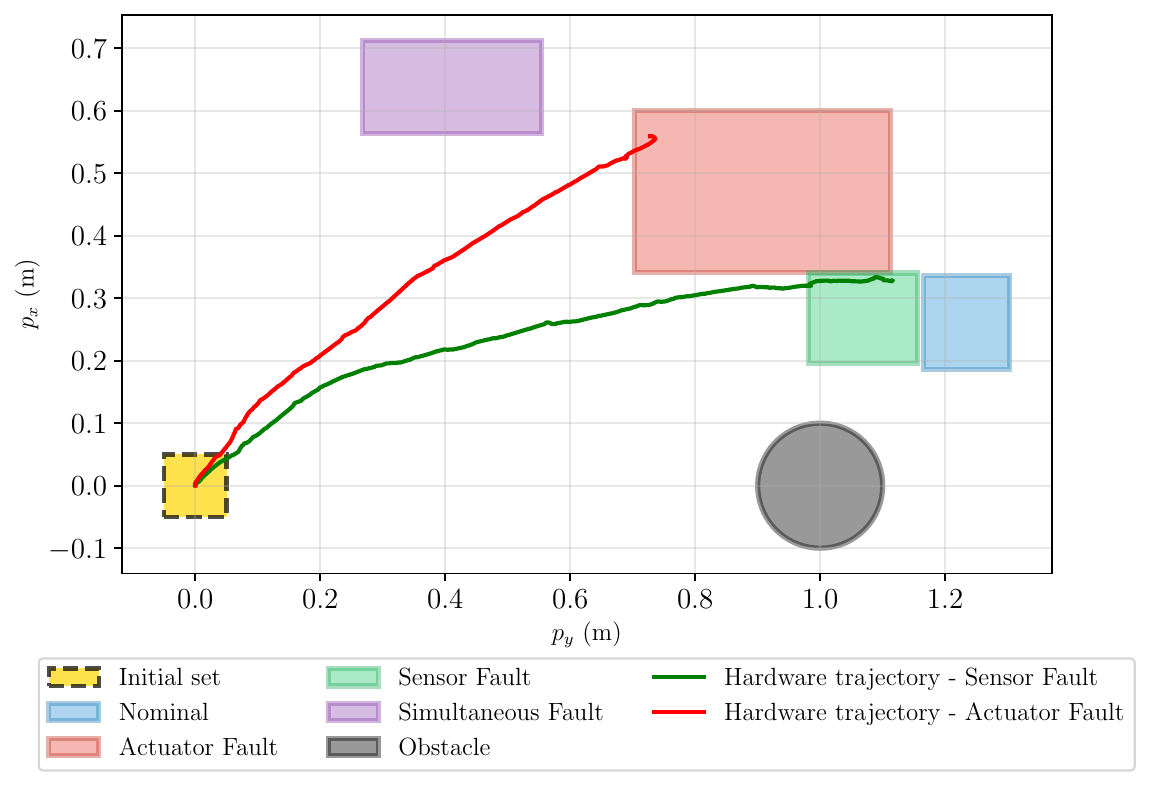}
    \vspace{-1em}
    \caption{Hardware validation of active fault diagnosis on Go2. The solution to (\ref{eq:opt_cost}) results in full separation between output intervals at $T=2s$. Both sensor fault and actuator fault scenarios are implemented on hardware, and the system's final output sets fall within predicted intervals. Notice that the collision avoidance constraint is modelled in state space, not output space.}
    \label{fig:go2_hardware}
    \vspace{-1em}
\end{figure}
\subsection{Runtime Comparison} \label{sec:runtimes}
We compare the runtimes of our implementation against the mixed integer program baseline in \cite{singh_input_2018} across the three robotic systems with different sets of constraints. 
 The runtimes reported for our method use (\ref{opt:active-intersection-refinement}) except for Go2 open loop, where the need to use unmeasured states to satisfy collision avoidance constraint requires the use of (\ref{eq:opt_cost}). 

When setting up the baseline, we use the same system dynamics and set of candidate models for each case, but restrict the control horizon to one step as the baseline can only solve for a single open loop control input. Since the baseline is only capable of solving a single-step open-loop controller, we do not provide a comparison for the case of unicycle output feedback.
\begin{table}[]
\centering
\caption{Runtime comparison between our method and baseline.}
\vspace{-0.5em}
\label{tab:runtimes}
\begin{tabular}{p{3cm}|cc}
\toprule
\centering \textbf{Task/Run Time} & \textbf{Ours} & \textbf{Baseline} \\ 
\hline
Unicycle Open Loop & 3.84ms & solver fails  \\
\hline

Unicycle Output Feedback with Collision Avoidance & 47.2ms & - \\
\hline
ADMIRE Open Loop & 45.3ms & 301s  \\
\hline
Go2 Open Loop & 40.2ms & 22.6s\\
\bottomrule
\end{tabular}
\vspace{-2em}
\end{table}
\vspace{-2em}
\section{Conclusion}
\vspace{0em}
This paper presented a safe, real-time framework for active model discrimination and fault diagnosis in uncertain nonlinear systems. By combining interval reachable-set over-approximations with a differentiable overlap-based objective, the proposed method computes informative control inputs that improve fault separability while explicitly enforcing safety constraints. Across simulation and hardware case studies, the approach demonstrated fast runtimes and reliable discrimination performance for both sensor and actuator fault scenarios.
A key contribution is the output-anticipating refinement strategy (\ref{opt:active-intersection-refinement}), which reduces conservatism by propagating ambiguity-relevant state intervals. A natural next step is to explore this idea by allowing a different controller to be assigned to each model pair, rather than optimizing a shared control law across all pairs.
\bibliographystyle{IEEEtran}
\vspace{-0.5em}
\bibliography{fault_detection}
\end{document}